\documentclass{article}

\usepackage{arxiv}
\usepackage{amsmath}
\usepackage{amssymb}
\usepackage{amsthm}
\usepackage[utf8]{inputenc} 
\usepackage[T1]{fontenc}    
\usepackage{hyperref}       
\usepackage{url}            
\usepackage{booktabs}       
\usepackage{amsfonts}       
\usepackage{nicefrac}       
\usepackage{microtype}      
\usepackage{lipsum}
\usepackage{fancyhdr}       

\usepackage{natbib}

\author{
  Landi He \\
  Shenzhen University of Advanced Technology \\
  \And
  Mingde Yao \\
  CUHK MMLab, CPII under InnoHK \\
  \And
  Shawn Young \\
  Shenzhen University of Advanced Technology \\
  \And
  Lijian Xu  \thanks{Corresponding author} \\
  Shenzhen University of Advanced Technology \\
  \texttt{xulijian@suat-sz.edu.cn}
}

\usepackage{times}
\usepackage{latexsym}

\usepackage{inconsolata}

\usepackage{graphicx}
\usepackage{subcaption}

\usepackage{amssymb}
\usepackage{amsthm}

\usepackage{xcolor}
\usepackage{colortbl}
\definecolor{rowblue}{rgb}{0.90,0.93,0.98}
\definecolor{rowgray}{rgb}{0.93,0.93,0.93}
\newcommand{\venue}[1]{\textit{(#1)}}

\theoremstyle{plain}
\newtheorem{proposition}{Proposition}

\theoremstyle{remark}

\usepackage[capitalize]{cleveref}
\usepackage{makecell}
\usepackage{arydshln}

\usepackage[capitalize]{cleveref}

\usepackage{amsmath}
\usepackage{amssymb}
\usepackage{algorithm}
\usepackage{algpseudocode}

\usepackage{makecell}
\usepackage{arydshln}

\title{DiffPrune: Differentiable Information Throttling for Token Pruning in Vision-Language Models}

\begin{document}
\maketitle

\begin{abstract}
Visual token pruning reduces the computational cost of Vision-Language Models (VLMs) by removing redundant visual tokens. The key is to learn a score that measures whether a token is useful. Existing methods typically rely on Gumbel-Softmax to \textit{approximate} discrete selection during training. Such selectors make the score depend on the behavior of a relaxed pruning operator, not directly on the consequence of information loss. In this paper, we propose \textbf{DiffPrune}, which gives token scores a direct meaning. During training, DiffPrune keeps all tokens and \textit{weakens each token's information} according to its score. If weakening a token hurts the task, the scorer is pushed to protect it; if not, the token can receive a lower score. Because the \textit{loss is differentiated} through this actual information-throttling path, the scorer avoids the unstable surrogate path of relaxed token selection. DiffPrune implements this idea with an \textbf{Information Throttler}, which injects variance-preserving noise into visual tokens, where high-score tokens remain close to their original representations, while low-score tokens carry less original information. At inference, the throttler is removed, and hard top-$K$ pruning is applied using the learned scores. Across ten VLM benchmarks, DiffPrune retains $96.5\%$ of full-model accuracy while accelerating LLM prefill by $2.85\times$, with only $0.69$ ms inference overhead. Code will be publicly available. 
\end{abstract}

\section{Introduction}
\label{sec:intro}

Modern vision-language models (VLMs) incur substantial inference costs because images produce many visual tokens \citep{arefeen2024vita}. Recent work reduces this cost through token compression, adaptive computation, and visual-token pruning \citep{vasu2025fastvlm,dong2025mmtok,feng2026see}. Visual-token pruning directly shortens the input sequence without adding latent representations or auxiliary decoding stages \citep{jeddi2025similarity}. Its effectiveness, however, depends on reliably identifying which tokens to retain.

Token pruning connects training, which needs informative Scorer updates, with deployment, which only requires a ranking under a prescribed budget.
To train a learnable pruner, the discrete keep-or-drop decision must be handled during backpropagation. Gumbel-Softmax has emerged as the standard solution. It relaxes the discrete choice into a continuous mask, enabling gradient flow through a reparameterized categorical distribution, and is often paired with a straight-through estimator (STE) to execute hard sampling while retaining a smooth backward path during training.
The relaxation supplies a standard backpropagation path from task loss to token scores before hard selection is used at deployment.

\begin{figure}[t]
    \centering
    \includegraphics[width=0.90\linewidth]{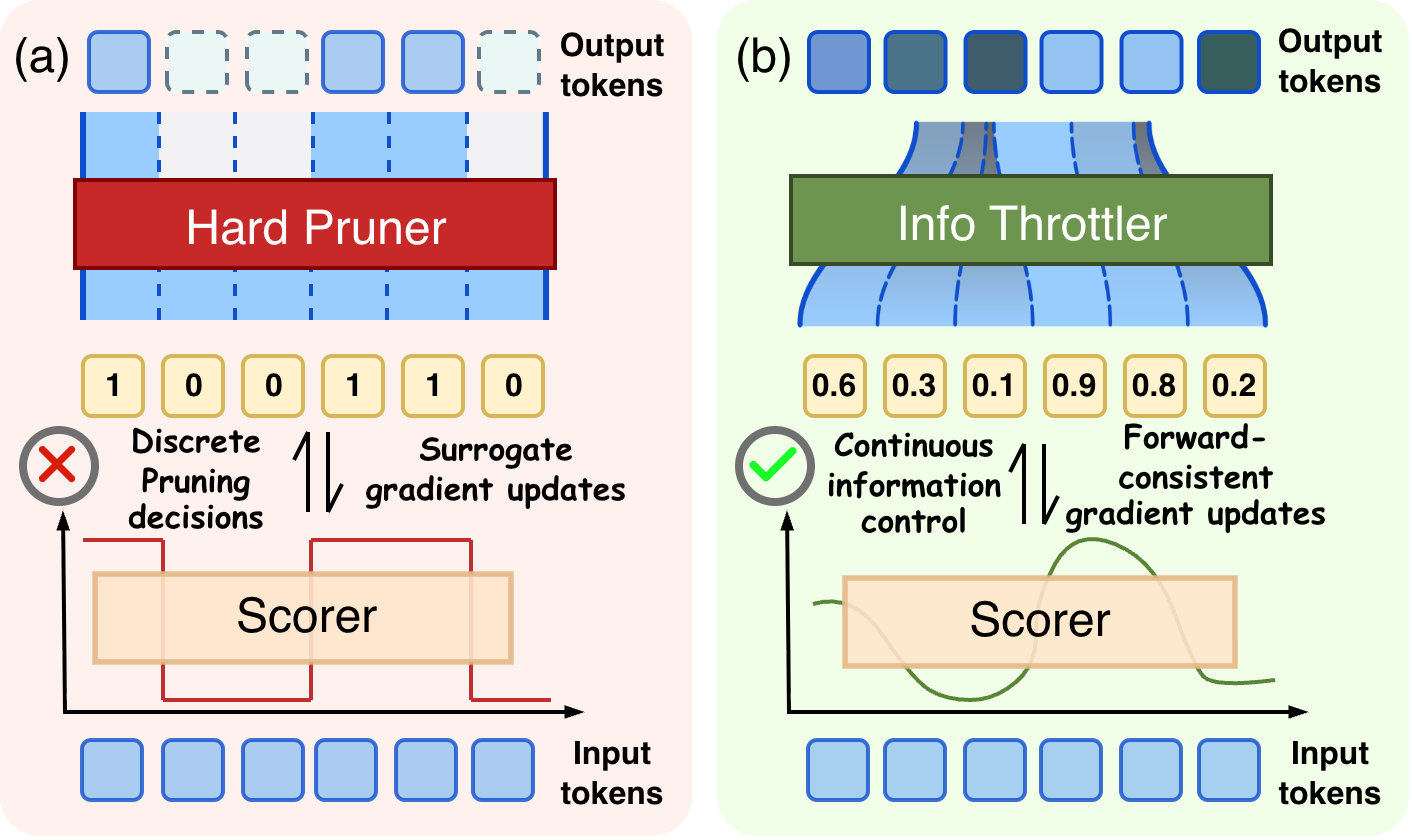}
    \caption{Comparison of training paradigms. (a) Discrete pruning decisions with surrogate gradient updates. (b) Continuous information control with forward-consistent gradient updates.}
    \label{fig:overview}
\end{figure}

In the Gumbel-STE formulation studied here, the issue is not a lack of smoothness in the relaxation itself, but a mismatch between the executed forward operator and the differentiated backward operator. As shown in \Cref{fig:overview}(a), the forward pass makes discrete pruning decisions by executing a hard mask, whereas the scorer is trained with surrogate gradient updates that substitute the Jacobian of a continuous relaxation that did not produce the forward loss \citep{zhao2025fine,liang2025dynamic}. We refer to this as a \emph{forward--backward estimator mismatch}. Consequently, these surrogate gradient updates do not differentiate the loss-producing computation, which can distort the learned token importance and make optimization noisy or unstable, particularly as the backbone grows.

We validate this diagnosis with controlled probes on DeiT variants~\citep{touvron2021training}. As shown in \Cref{fig:gradient-scaling}, DiffPrune achieves $4.4\times$ to $28.4\times$ higher cross-batch gradient-direction consistency than Gumbel-Softmax across model sizes. This consistency gap translates directly into accuracy stability: on DeiT-Base, Gumbel collapses to near-random selection with high variance, while DiffPrune maintains robust performance. The mismatch becomes more consequential as the backbone scales.

\begin{figure}[t]
    \centering
    \includegraphics[width=\linewidth]{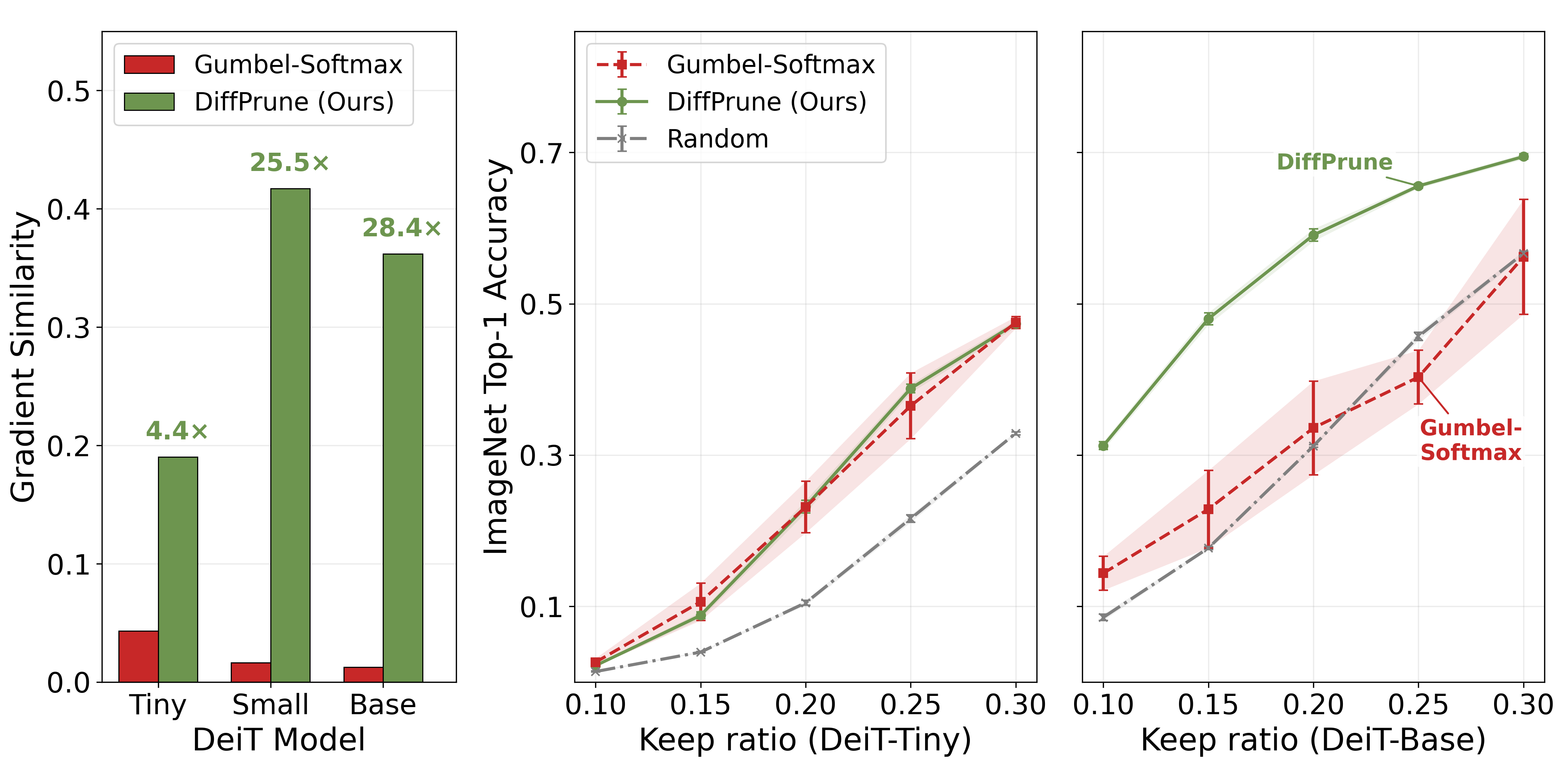}
    \caption{Gradient consistency and accuracy in controlled DeiT probes~\citep{touvron2021training}. Left: cross-batch gradient-direction similarity across DeiT sizes. Middle and right: accuracy at different keep ratios on DeiT-Tiny and DeiT-Base.}
    \label{fig:gradient-scaling}
\end{figure}

Motivated by these findings, we propose DiffPrune, a fully differentiable visual-token pruning framework for VLMs. As shown in \Cref{fig:overview}(b), DiffPrune avoids discrete selection during training and instead controls token information continuously under a fixed budget. Backpropagation differentiates the same throttling operations executed in the training forward pass, providing a continuous learning path for the Scorer.
DiffPrune employs a Scorer that produces one importance logit for each token. A Soft Top-$K$ head \citep{xie2020differentiable} maps these logits to continuous weights whose sum equals the target budget. During training, these weights determine how strongly the corresponding tokens are restricted, without inserting a hard keep-or-drop decision into the training graph.
The Scorer's output then guides an Information Throttler, which injects Gaussian noise into each token with variance modulated by its importance score: lower-scoring tokens receive stronger perturbation, suppressing their contribution to the downstream task. We use square-root interpolation coefficients to approximately preserve the scale of the token representations. For fixed sampled noise, the task loss remains differentiable through the throttling operation executed during training.
At inference, the Throttler is removed and the original top-$K$ tokens are retained according to the learned scores.

Our contributions are:
\begin{itemize}
    \item We show that the limitation of surrogate-gradient pruning lies at the operator level rather than the scorer level, and propose DiffPrune as a fully differentiable framework for visual-token pruning.
    \item We introduce a VP-Noise Gate inside the Information Throttler. It interpolates each token with Gaussian noise according to its continuous score, suppressing token content without relying on a discrete keep-or-drop operation.
    \item In controlled DeiT probes, DiffPrune reaches up to $28.4\times$ higher cross-batch gradient-direction coherence than Gumbel-Softmax; across LLaVA-1.5-7B, LLaVA-NEXT-7B, and Qwen2.5-VL-7B evaluations, it preserves high task performance under aggressive pruning while adding only $0.69$\,ms of inference overhead.
\end{itemize}

\section{Related Work}
\label{sec:related-work}

\subsection{Visual Token Importance Modeling}
\label{sec:related-free}

Many methods derive token-importance signals directly from a pretrained VLM \citep{shao2025survey,yao2026towards}.
Attention-based methods use saliency from the vision tower or language decoder \citep{fastv,sparsevlm,fastervlm,vtw,hired,pyramiddrop,topv}.
Redundancy-based methods remove or merge visually similar features \citep{tome,prumerge,visionzip,fitprune,feather-throttle}.
Coverage-based methods preserve diverse or spatially distributed evidence \citep{divprune,dart,holov,balanced-token-pruning,fangprune,wu2025vlm}.
ICCTP uses instruction-guided cross-modal clustering, while AgilePruner studies attention and diversity for adaptive token pruning \citep{ICCTP,agilepruner}.
Together, these training-free methods define importance using saliency, redundancy, or representativeness and require no selector training for deployment.
These signals emphasize different aspects of ranking: saliency reflects response strength, redundancy reflects similarity, and coverage reflects representation across the input.

Other approaches learn token rankings from supervision or lightweight pretraining.
LearnPruner learns a pruning module before combining its ranking with attention-guided selection inside the LLM \citep{takezoe2026learnpruner}.
OC-VTP pretrains an object-centric pruner with a token-reconstruction objective and then inserts it into existing VLMs without model-specific fine-tuning \citep{li2026ocvtp}.
Together with training-free methods, they span fixed scoring rules, pretrained pruners, and task-conditioned selectors.

\subsection{Optimizing Learned Token Selection}
\label{sec:related-based}

A learned selector must connect continuous importance scores to a discrete subset under a fixed token budget.
A common approach trains this step with a continuous surrogate.
DynamicViT samples token survival with Gumbel-Softmax and differentiates the soft relaxation \citep{rao2021dynamicvit}.
ATP-LLaVA uses a temperature-sharpened sigmoid-threshold mask \citep{atp-llava}, while Dynamic-LLaVA and LightVLA use binary or argmax-like selections with Gumbel-Softmax-style backward paths \citep{huang2025dynamic,jiang2025better}.
These methods use different scorers and pruning designs, but all optimize discrete or nearly discrete forward choices through smooth backward approximations.
Despite different estimators, these methods share an interface in which token-level preferences are converted into a subset that satisfies the pruning budget.

Other methods change the gradient estimator or train discrete selection with reinforcement learning.
Shiva-DiT replaces the softmax surrogate with a residual-aware STE in diffusion transformers \citep{zhang2026shiva}.
TwigVLM++ combines distillation with policy-gradient reinforcement learning \citep{twigvlmpp}, while TOP-RL learns progressive, task-conditioned pruning policies through reinforcement learning \citep{wang2026toprl}.
DiffPrune instead removes discrete selection from training and differentiates the continuous information-throttling operations used in the forward pass.
This operator-level change, analyzed in \Cref{sec:pilot}, distinguishes DiffPrune from methods that retain discrete selection and change how it is optimized.

\section{Motivation}
\label{sec:pilot}

\subsection{Limitations of Surrogate-Gradient Pruning}
\label{sec:pilot-limits}

Training-based token-pruning methods typically share two components.
A learnable scorer assigns importance logits $\mathbf{s}\in\mathbb{R}^{N}$ to the tokens, and a pruning operator retains the top-$K$ tokens (see \Cref{fig:overview}(a)).
The scorer is differentiable, whereas hard top-$K$ selection is not, so the downstream loss cannot be directly backpropagated through the pruning operator.

These methods commonly use Gumbel perturbations together with a straight-through estimator (STE).
Let $\mathbf{g}$ denote sampled Gumbel noise, $H_K$ the hard top-$K$ operator, and $R_\tau$ its continuous relaxation.
The forward and backward computations can be written as
\begin{align}
\mathrm{forward:}\quad
& \mathbf{m}=H_K(\mathbf{s}+\mathbf{g})\in\{0,1\}^{N}, \nonumber\\
\mathrm{backward:}\quad
& \frac{\partial\mathcal{L}}{\partial\mathbf{s}}
\leftarrow
J_{R_\tau}(\mathbf{s}+\mathbf{g})^{\top}
\frac{\partial\mathcal{L}}{\partial\mathbf{m}}.
\label{eq:gs-ste}
\end{align}
Here, $J_{R_\tau}$ is the Jacobian of the relaxation used only in the backward pass.
Thus, the forward pass evaluates a discrete mask, whereas the backward pass differentiates a soft operator that was not executed in the forward computation.
The hard operator $H_K$ is piecewise constant with respect to $\mathbf{s}$: as long as the ordering of the scores does not change, the retained subset remains unchanged and its derivative is almost everywhere zero \citep{shah2024improving}.
Only when a score crosses the $K$-th order statistic and triggers an index swap does the output mask change discontinuously.

This forward--backward mismatch generally makes the STE gradient a biased estimator \citep{shekhovtsov2021bias}:
\begin{equation}
\mathbb{E}_{\mathbf{g}}\!\left[
\widehat{\nabla_{\mathbf{s}}\mathcal{L}}_{\mathrm{STE}}
\right]
\neq
\nabla_{\mathbf{s}}\,
\mathbb{E}_{\mathbf{g}}\!\left[
\mathcal{L}\!\left(H_K(\mathbf{s}+\mathbf{g})\right)
\right].
\label{eq:ste-bias}
\end{equation}
Additionally, hard selection is sensitive near the top-$K$ boundary: small perturbations can swap similarly scored tokens, abruptly changing the retained subset and the downstream gradient.
In \cref{sec:pilot-alt}, we examine the resulting optimization instability and introduce a gradient-continuous alternative.

\subsection{A Gradient-Continuous Alternative}
\label{sec:pilot-alt}

Hard selection makes the training loss sensitive to small changes in token scores.
A continuous training path should instead make small score changes produce gradual changes in the loss.

\begin{figure}[t]
  \centering
  \includegraphics[width=\columnwidth]{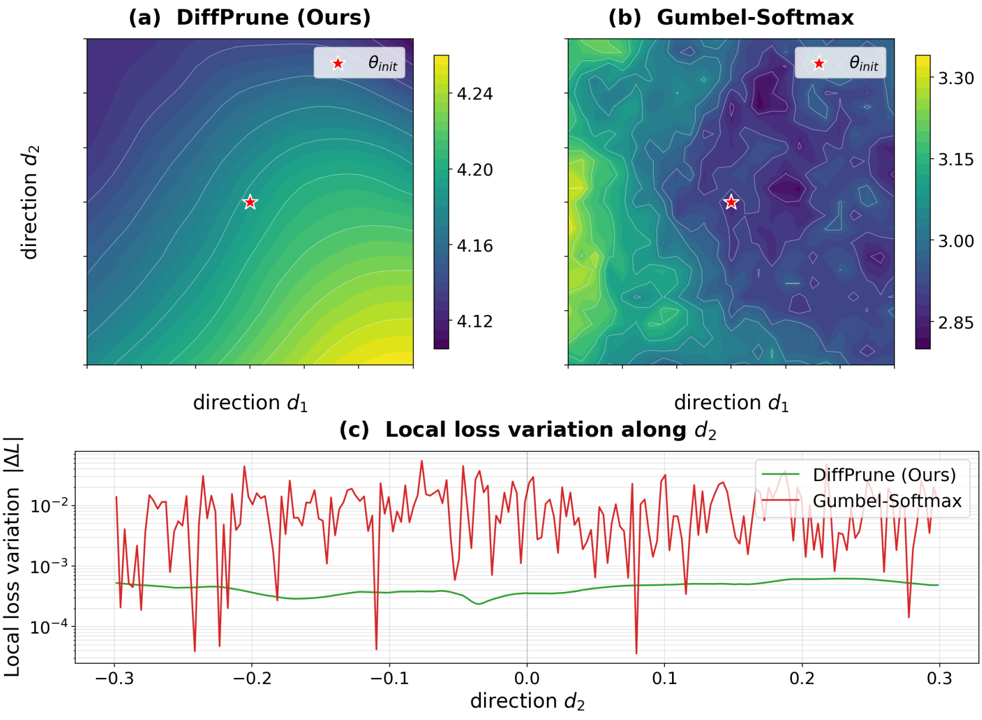}
  \caption{
  Loss around the scorer initialization.
  (a--b) Two-dimensional loss slices for DiffPrune and Gumbel-Softmax.
  (c) Loss variation along one direction.
  }
  \label{fig:pilot}
\end{figure}

\Cref{fig:pilot}(a--b) shows this difference.
The loss changes gradually for DiffPrune, while Gumbel-Softmax has sharp local variations.
With hard top-$K$, a small change in the scores can swap tokens at the selection boundary, so nearby scorer parameters can produce different token subsets and different losses.
DiffPrune avoids this discrete switch during training, so nearby scorer parameters give similar losses.

\Cref{fig:pilot}(c) shows the same pattern along one direction.
The DiffPrune curve changes slowly, while the Gumbel-Softmax curve jumps repeatedly; its mean stepwise variation is roughly $24\times$ larger.
These results support the idea that training should use token importance to control how much information each token carries, rather than turn it immediately into a keep-or-drop decision.
\Cref{sec:method} describes how we apply this principle to visual-token pruning.

\section{Methodology}
\label{sec:method}

\subsection{Overview}
\label{sec:method-arch}

DiffPrune views visual-token pruning as learning how strongly to restrict each token under a fixed budget.
Given an image $\mathbf{I}$, the frozen vision encoder produces visual tokens $\mathbf{X}^v=\mathcal{E}_v(\mathbf{I})\in\mathbb{R}^{N\times d_v}$.
As shown in \cref{fig:framework}, DiffPrune has two components.
The Scorer contains $\mathcal{S}_\theta$ and a budgeted Soft Top-$K$ head $\Phi_K$.
The Information Throttler contains a VP-Noise Gate $\mathcal{G}$ and a train-only Diagonal-Attention Block $\mathcal{D}_\phi$.
The base VLM is frozen; only $\theta$ and $\phi$ are optimized.

Let $\mathbf{W}$ denote the text tokens, $p_\Theta$ the frozen language model, and $\mathbf{Z}$ the multimodal sequence consumed by it.
Training is written as
\begin{equation}
\begin{aligned}
\min_{\theta,\phi}\quad
& -\sum_t \log p_\Theta\!\left(y_t^* \mid y_{<t}^*, \mathbf{Z}\right),\\
\mathbf{Z}
&= [\,\mathbf{P}_{v\to\ell}(\bar{\mathbf{X}}^v);\,\mathcal{E}_t(\mathbf{W})\,],
\end{aligned}
\label{eq:train-fwd}
\end{equation}
where $\bar{\mathbf{X}}^v=\mathcal{D}_\phi(\mathcal{G}(\mathbf{X}^v;\boldsymbol{\alpha}))$ and
$\boldsymbol{\alpha}=\Phi_K(\mathcal{S}_\theta(\mathbf{X}^v)/\tau)$ with $\sum_i\alpha_i=K$.
Here $K$ is the token budget, i.e., the target number of visual tokens retained by the deployed pruner; during training, the token scores satisfy $\sum_i\alpha_i=K$.
Because their sum is fixed, the Scorer cannot assign a high score to every token.
Minimizing the language-model loss trains it to protect tokens whose suppression hurts the task and penalize tokens whose suppression has less effect.
The scores control the restriction imposed by DiffPrune; they do not estimate each token's intrinsic information content.
Hard token selection is absent from \cref{eq:train-fwd}, so the Scorer is trained through the same continuous operations used in the forward pass.

\begin{proposition}[Forward-consistent score gradient]
\label{prop:forward-consistent}
Let $\ell_{\boldsymbol{\epsilon}}(\theta,\phi)$ be the loss in \cref{eq:train-fwd} for fixed data and sampled VP noise $\boldsymbol{\epsilon}$. For finite $\tau$, assume that its executed operators are differentiable. Backpropagation then returns the exact pathwise gradient $\nabla_\theta \ell_{\boldsymbol{\epsilon}}(\theta,\phi)$. In contrast, \cref{eq:gs-ste} substitutes $J_{R_\tau}$ for the Jacobian of the executed $H_K$ and therefore generally is not the gradient of its hard-forward loss.
\end{proposition}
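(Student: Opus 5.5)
The proof has two parts: a positive claim about DiffPrune and a negative claim about \cref{eq:gs-ste}.

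\textbf{Positive claim.} Write the fixed-noise training loss as an explicit composition of the executed operators:
\begin{equation*}
\ell_{\boldsymbol{\epsilon}}(\theta,\phi)=\mathcal{L}\circ p_\Theta\circ \mathbf{P}_{v\to\ell}\circ\mathcal{D}_\phi\circ\mathcal{G}(\,\cdot\,;\boldsymbol{\epsilon})\circ\Phi_K\circ(\mathcal{S}_\theta(\mathbf{X}^v)/\tau).
\end{equation*}
Once the data and $\boldsymbol{\epsilon}$ are fixed, $\boldsymbol{\epsilon}$ enters only as a constant. The VP-Noise Gate is then a deterministic map of $(\mathbf{X}^v,\boldsymbol{\alpha})$, namely $\sqrt{\alpha_i}\,\mathbf{x}_i+\sqrt{1-\alpha_i}\,\boldsymbol{\epsilon}_i$. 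Every factor in the composition is differentiable by hypothesis, so $\ell_{\boldsymbol{\epsilon}}$ is differentiable in $\theta$ and the multivariate chain rule gives
\begin{equation*}
\nabla_\theta\ell_{\boldsymbol{\epsilon}}=J_{\mathcal{S}_\theta}^{\top}\,\tfrac{1}{\tau}\,J_{\Phi_K}^{\top}\,J_{\mathcal{G}}^{\top}\,J_{\mathcal{D}_\phi}^{\top}\cdots\nabla\mathcal{L}.
\end{equation*}
Reverse-mode backpropagation evaluates exactly this product of vector--Jacobian products. Each local Jacobian is the Jacobian of the operator actually run in the forward pass, at the point actually visited. So the returned quantity equals the pathwise gradient, with no substitution.

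A caveat concerns the square root in $\mathcal{G}$. It is non-differentiable where $\alpha_i\in\{0,1\}$. For finite $\tau$, the entropic or soft top-$K$ map $\Phi_K$ yields $\alpha_i\in(0,1)$, so this point is never reached. The proposition's differentiability assumption also covers it.

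\textbf{Negative claim.} In the STE forward pass, the executed map $\mathbf{s}\mapsto H_K(\mathbf{s}+\mathbf{g})$ is piecewise constant. On the open, full-measure set where the $K$-th and $(K{+}1)$-th order statistics are distinct, its true Jacobian is $0$. The true gradient of the hard-forward loss $\mathcal{L}(H_K(\mathbf{s}+\mathbf{g}))$ is therefore $0$ almost everywhere, and undefined on ties.

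The estimator in \cref{eq:gs-ste} instead returns $J_{R_\tau}(\mathbf{s}+\mathbf{g})^\top\partial\mathcal{L}/\partial\mathbf{m}$. This is nonzero whenever $J_{R_\tau}$ is nonzero and $\partial\mathcal{L}/\partial\mathbf{m}$ is not in its kernel. For softmax-type relaxations $J_{R_\tau}$ has full rank minus the simplex direction, so a generic nonzero upstream gradient gives a nonzero output. Hence the two quantities differ for generic inputs, which is the meaning of ``generally'' in the statement.

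To tie this to the expected-loss view, cite \cref{eq:ste-bias} and \citet{shekhovtsov2021bias}: even the smoothed objective $\mathbb{E}_{\mathbf{g}}$ is not matched in expectation.

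\textbf{Main obstacle.} The only delicate step is stating ``generally'' precisely. I would exhibit a single small witness, for example $N=2$, $K=1$, $\mathcal{L}$ linear in $\mathbf{m}$, and the softmax relaxation, where the STE output is strictly nonzero while the true gradient is zero. I would then argue that the set of inputs where the two coincide is nongeneric.
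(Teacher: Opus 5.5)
Your proof is correct and follows the same route as the paper. For the positive claim you use the chain rule through the executed maps $\mathcal{S}_\theta$, $\Phi_K$, $\mathcal{G}$, $\mathcal{D}_\phi$ and the frozen model with $\boldsymbol{\epsilon}$ held fixed, and for the contrast you use the fact that $R_\tau$ is not the operator executed in the hard forward graph; your negative part is just more explicit than the paper's one-line remark, using the almost-everywhere-zero Jacobian of $H_K$ and the $N=2$, $K=1$ witness (strictly, nonvanishing requires $\partial\mathcal{L}/\partial\mathbf{m}\notin\ker J_{R_\tau}^{\top}$, which equals your $\ker J_{R_\tau}$ for the symmetric softmax-type Jacobians you consider).
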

\begin{proof}
For fixed $\boldsymbol{\epsilon}$, the chain rule through the executed maps $\mathcal{S}_\theta$, $\Phi_K$, $\mathcal{G}$, $\mathcal{D}_\phi$, and the frozen downstream model yields the stated gradient. In \cref{eq:gs-ste}, $R_\tau$ is absent from the hard forward graph, so its Jacobian generally does not differentiate that graph's loss.
\end{proof}
The proposition establishes training-gradient faithfulness only; it neither makes hard top-$K$ differentiable nor bounds the hard-deployment risk.

\begin{figure*}[t]
  \centering
  \includegraphics[width=0.99\textwidth]{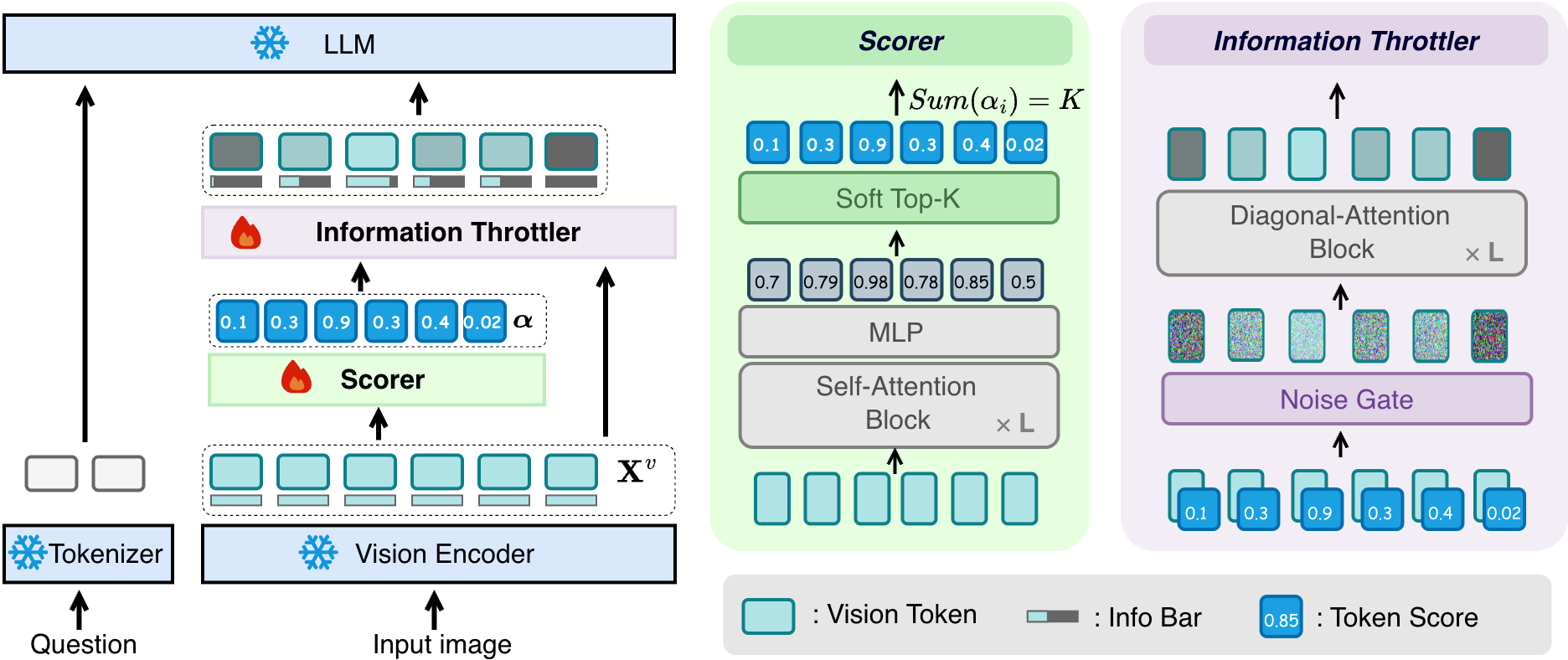}
  \caption{
Overview of DiffPrune. It has two components: a Scorer and an Information Throttler. The Scorer produces token scores whose sum is $K$. The VP-Noise Gate suppresses lower-scoring tokens more strongly, and the train-only Diagonal-Attention Block maps the noised tokens back to the visual representation space expected by the frozen projector before they enter the LLM. At inference, hard top-$K$ retains the original tokens with the highest scores, and the Information Throttler is removed.
}
\label{fig:framework}
\end{figure*}

At inference, DiffPrune uses the trained scorer as a ranking function and gathers the original top-$K$ visual tokens:
\begin{equation}
\hat{\mathbf{X}}^v
= \operatorname{TopK}\!\left(\mathbf{X}^v;\mathcal{S}_\theta(\mathbf{X}^v),K\right)
\in \mathbb{R}^{K\times d_v}.
\label{eq:infer-fwd}
\end{equation}
Here $\operatorname{TopK}(\mathbf{X};\mathbf{s},K)$ denotes score-indexed token gathering.
The retained tokens keep their original order and position indices.
The entire Information Throttler is removed from the deployed graph, leaving one Scorer pass plus this gather operation.

Here, \emph{fully differentiable} refers only to the training-time score-to-loss path; deployment uses hard top-$K$. We distinguish its \emph{forward--backward consistency} from \emph{training--inference operator alignment}. The former follows from \cref{prop:forward-consistent}; the latter asks whether the learned ranking transfers to hard selection and is evaluated separately in \cref{fig:vp-validation,tab:main-llava15,tab:main-llavanext,tab:main-qwen}.

\subsection{Scorer}
\label{sec:method-softtopk}

The Scorer uses a two-layer Transformer encoder and a linear head to assign one logit $s_i$ to each visual token $\mathbf{x}_i^v$, then applies budgeted Soft Top-$K$ to produce continuous token scores $\boldsymbol{\alpha}$ \citep{struski2025lapsum}:
\begin{equation}
\boldsymbol{\alpha}
= \Phi_K\!\left(\mathbf{s}/\tau\right)\in[0,1]^N,
\qquad
\sum_{i=1}^{N}\alpha_i = K.
\label{eq:soft-topk}
\end{equation}
The budget is enforced by $\Phi_K$, so no auxiliary budget loss is used.
Cosine annealing lowers $\tau$ during training, making $\boldsymbol{\alpha}$ approach a binary top-$K$ mask.
Without a score tie at the $K$-th boundary, the low-temperature limit matches the hard ranking in \cref{eq:infer-fwd}.
During training, the downstream VLM receives tokens restricted according to $\boldsymbol{\alpha}$, and backpropagation differentiates the same map used in the forward pass.
This differs from the surrogate-gradient operator in \cref{eq:gs-ste}, where the forward executes a hard mask but the backward follows a different smooth path.

\subsection{Information Throttler}
\label{sec:method-throttler}

The Information Throttler suppresses each token according to its token score.
The VP-Noise Gate adds more noise to tokens with lower scores, but the resulting representations may not be handled reliably by the frozen downstream model.
The train-only Diagonal-Attention Block maps these noised tokens back to the visual representation space expected by the frozen projector before they enter the LLM.

\paragraph{VP-Noise Gate.}
VP-Noise~\citep{ho2020denoising} provides a continuous way to restrict the information carried by each token.
The gate directly accepts a token score $\alpha_i\in[0,1]$ and applies it to token $\mathbf{x}_i$ as
\begin{equation}
\tilde{\mathbf{x}}_i = \sqrt{\alpha_i}\,\mathbf{x}_i + \sqrt{1-\alpha_i}\,\boldsymbol{\epsilon}_i,
\qquad \boldsymbol{\epsilon}_i \sim \mathcal{N}(\mathbf{0},\mathbf{I}),
\label{eq:vp-noise}
\end{equation}
The score $\alpha_i$ controls the strength of the information restriction.
When $\alpha_i$ is close to $1$, the gate preserves most of the original token.
When $\alpha_i$ is close to $0$, the gate replaces most of the token content with noise.
Intermediate scores impose intermediate levels of restriction.
The square-root coefficients keep the feature variance stable.

Once $\boldsymbol{\epsilon}_i$ is sampled for a forward pass, the VP-Noise output remains differentiable with respect to $\alpha_i$.
The language-model loss can therefore train the Scorer through the same operation used in the forward pass.

\paragraph{Diagonal-Attention Block.}\label{sec:method-denoiser}
VP-Noise changes the distribution of visual-token representations.
The frozen downstream model may therefore have difficulty processing them.
We therefore use a train-only Diagonal-Attention Block $\mathcal{D}_\phi$ to map the noised tokens back to the representation space expected by the frozen multimodal projector.
The block uses an identity attention mask,
\[
\mathcal{D}_\phi(\tilde{\mathbf{X}}^v)
= \operatorname{Block}\!\left(\tilde{\mathbf{X}}^v;\operatorname{mask}=\mathbf{I}_N\right),
\]
so each token is transformed independently.
The diagonal attention mask prevents tokens from obtaining information from other positions.
The block can therefore adapt the noised representations without undoing the token-wise information restriction.
It is removed with the Noise Gate at inference.

\begin{figure}[t]
  \centering
  \includegraphics[width=\columnwidth]{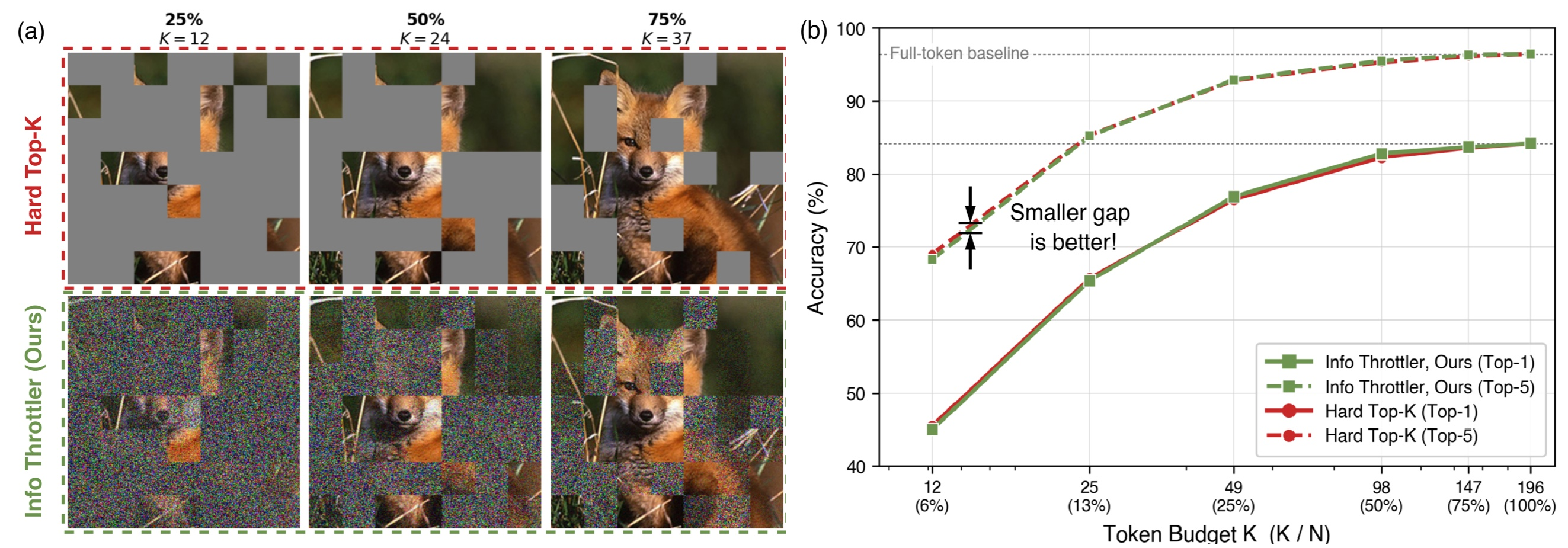}
  \caption{
    Validation of training--inference operator alignment.
    (a) Illustrative examples.
    (b) Top-1 and Top-5 accuracy across token budgets.
    Both operators use identical token scores and budgets from the same trained Scorer; hard top-$K$ is evaluated without retraining or recalibration.
  }
  \label{fig:vp-validation}
\end{figure}

\Cref{fig:vp-validation} directly evaluates whether the ranking learned with the continuous Information Throttler transfers to the hard top-$K$ operator used at deployment.
Both evaluations use identical scores and token budgets from the same trained Scorer, with no retraining or recalibration, so only the restriction operator differs.
\Cref{fig:vp-validation}(a) is an illustration on a $7{\times}7$ grid; \Cref{fig:vp-validation}(b) provides the quantitative comparison.
The small Top-1 and Top-5 gaps across budgets show that the learned ranking transfers to hard selection with limited accuracy degradation.
Because the checkpoint and scores are fixed, this comparison separates the operator change from scorer quality or additional optimization.
The agreement across budgets and both metrics shows that the alignment is not confined to a single operating point.
Together with \cref{tab:main-llava15,tab:main-llavanext,tab:main-qwen}, which use the deployed hard top-$K$ path, \cref{fig:vp-validation} provides a controlled operator-level check alongside downstream VLM evaluation.

The two components give the Scorer a continuous score-to-loss path under a fixed budget.
During training, the Scorer learns which tokens can be suppressed with the least effect on the language-model loss; at inference, hard top-$K$ retains the tokens with the highest learned scores.

\begin{table*}[!t]
  \centering
  {\small
  \setlength{\tabcolsep}{1.6pt}
  \begin{tabular}{@{}lcccccccccccc@{}}
    \toprule
    Method & Loc. & GQA & MMB & MMB-CN & MME & POPE & SQA & VQAv2 & TextVQA & SEED & VizWiz & Avg. \\
    \midrule
    \rowcolor{rowgray} \multicolumn{13}{c}{\textit{Upper Bound, 576 Tokens (100\%)}} \\
    Vanilla & -- & 61.9 & 64.7 & 58.1 & 1862 & 85.9 & 69.5 & 78.5 & 58.2 & 60.5 & 54.3 & 100\% \\
    \midrule
    \rowcolor{rowgray} \multicolumn{13}{c}{\textit{Retain 128 Tokens (77.8\% pruning)}} \\
    FastV \venue{ECCV'24} & L2--32 & 49.6 & 56.1 & 55.9 & 1490 & 59.6 & 60.2 & 61.8 & 50.6 & 55.9 & 51.3 & 85.2\% \\
    VisionZip \venue{CVPR'25} & Pre & 58.9 & 62.6 & -- & 1823 & 83.7 & 68.3 & 76.6 & 56.8 & 55.8 & -- & 96.6\% \\
    HoloV \venue{NeurIPS'25} & Pre & 57.7 & 63.9 & 56.5 & 1802 & 82.8 & 69.8 & 75.5 & 56.8 & -- & 51.5 & 96.8\% \\
    ICCTP \venue{AAAI'26} & Pre & 57.8 & 61.6 & 55.8 & 1775 & 85.2 & 69.0 & 75.6 & 56.8 & -- & 53.0 & 96.7\% \\
    PRUNESID \venue{ICLR'26} & Pre & 58.8 & 62.1 & -- & 1749 & 86.5 & 68.3 & 75.3 & 54.7 & 57.8 & 55.8 & 96.9\% \\
    OC-VTP \venue{CVPR'26} & Pre & 57.5 & 62.0 & -- & 1742 & 84.4 & 69.7 & 74.9 & 56.8 & 60.3 & 55.1 & \underline{97.2\%} \\
    \textbf{DiffPrune (Ours)} & Pre & 57.5 & 62.9 & 57.4 & 1765 & 85.7 & 70.2 & 76.1 & 54.9 & 58.4 & 56.1 & \textbf{97.6\%} \\
    \midrule
    \rowcolor{rowgray} \multicolumn{13}{c}{\textit{Retain 64 Tokens (88.9\% pruning)}} \\
    FastV \venue{ECCV'24} & L2--32 & 46.1 & 48.0 & 52.7 & 1256 & 48.0 & 51.1 & 55.0 & 47.8 & 51.9 & 50.8 & 76.8\% \\
    VisionZip \venue{CVPR'25} & Pre & 57.0 & 61.5 & -- & 1756 & 80.9 & 68.8 & 74.2 & 56.0 & 53.4 & -- & 94.2\% \\
    HoloV \venue{NeurIPS'25} & Pre & 55.3 & 63.3 & 55.1 & 1715 & 80.3 & 69.5 & 72.8 & 55.4 & -- & 52.8 & 94.8\% \\
    ICCTP \venue{AAAI'26} & Pre & 55.6 & 60.3 & 54.6 & 1704 & 82.8 & 68.9 & 73.0 & 55.5 & -- & 52.5 & 94.3\% \\
    PRUNESID \venue{ICLR'26} & Pre & 57.1 & 58.8 & -- & 1733 & 83.8 & 67.8 & 73.7 & 54.2 & 56.1 & 56.9 & \underline{95.1\%} \\
    OC-VTP \venue{CVPR'26} & Pre & 54.6 & 59.3 & -- & 1668 & 80.2 & 69.4 & 73.2 & 55.0 & 56.4 & 55.4 & 94.0\% \\
    \textbf{DiffPrune (Ours)} & Pre & 56.8 & 62.6 & 56.6 & 1723 & 83.4 & 70.1 & 73.9 & 54.3 & 57.6 & 57.2 & \textbf{96.5\%} \\
    \midrule
    \rowcolor{rowgray} \multicolumn{13}{c}{\textit{Retain 32 Tokens (94.4\% pruning)}} \\
    FastV \venue{ECCV'24} & L2--32 & 41.5 & 37.8 & 33.2 & 1090 & 32.5 & 42.6 & 43.4 & 42.5 & -- & -- & 58.6\% \\
    VisionZip \venue{CVPR'25} & Pre & 51.8 & 57.7 & 50.3 & 1536 & 68.7 & 68.8 & 67.1 & 53.1 & -- & 52.9 & 88.3\% \\
    ICCTP \venue{AAAI'26} & Pre & 52.9 & 58.0 & 47.1 & 1602 & 80.4 & 69.4 & 68.0 & 53.3 & -- & 51.9 & 89.9\% \\
    OC-VTP \venue{CVPR'26} & Pre & 55.3 & 56.8 & -- & 1581 & 81.5 & 67.4 & 68.6 & 51.5 & 57.1 & 56.6 & \underline{92.0\%} \\
    \textbf{DiffPrune (Ours)} & Pre & 54.7 & 61.4 & 54.1 & 1655 & 81.2 & 69.9 & 71.7 & 52.4 & 56.5 & 56.9 & \textbf{94.0\%} \\
    \bottomrule
  \end{tabular}
  }
  \caption{
    Results on LLaVA-1.5-7B under three retained-token budgets.
    Loc.\ indicates the pruning location, and Avg.\ reports retention normalized by the unpruned upper bound.
    Best values are bold; second-best values are underlined.
  }
  \label{tab:main-llava15}
\end{table*}

\section{Experiments}
\label{sec:experiments}

\subsection{Experimental Setup}
\label{sec:exp-setup}

DiffPrune is trained on $10\%$ of the ImageNet-1K training split~\citep{deng2009imagenet} with captions from ImageNet-1K-VL-Enriched~\citep{visual_layer_imagenet1k_vl_enriched}.
Only the Scorer and the Diagonal-Attention Block inside the Information Throttler are optimized, adding approximately $84$M trainable parameters, while the base VLM remains frozen.
Complete implementation, training, baseline, profiling, and benchmark details are provided in the supplementary material.

\subsection{Comparative Results}
\label{sec:main-results}

\paragraph{Results on LLaVA-1.5-7B.}
LLaVA-1.5-7B processes $336{\times}336$ inputs that produce $N{=}576$ visual tokens, and \cref{tab:main-llava15} reports three aggressive compression regimes that retain $128/64/32$ tokens (77.8\%/88.9\%/94.4\% pruning).
DiffPrune achieves the strongest average retention at all three budgets.
It exceeds the closest baseline by $0.4$, $1.4$, and $2.0$ percentage points at $K{=}128$, $64$, and $32$, respectively, showing that its advantage persists when only $5.6\%$ of the original visual tokens remain.

\begin{table*}[!t]
  \centering
  {\small
  \setlength{\tabcolsep}{2.5pt}
  \begin{tabular}{@{}lcccccccccc@{}}
    \toprule
    Method & Loc. & GQA & MMB & MMB-CN & MME & POPE & SQA & VQAv2 & TextVQA & Avg. \\
    \midrule
    \rowcolor{rowgray} \multicolumn{11}{c}{\textit{Upper Bound, 2880 Tokens (100\%)}} \\
    Vanilla & - & 64.2 & 67.4 & 60.6 & 1851 & 86.5 & 70.1 & 80.8 & 64.9 & 100\% \\
    \midrule
    \rowcolor{rowgray} \multicolumn{11}{c}{\textit{Retain 320 Tokens (88.9\% pruning)}} \\
    FastV \venue{ECCV'24} & L2--32 & 55.9 & 61.6 & 51.9 & 1661 & 71.7 & 62.8 & 71.9 & 55.7 & 87.6\% \\
    PDrop \venue{CVPR'25} & L8--24 & 56.4 & 63.4 & 56.2 & 1663 & 77.6 & 67.5 & 73.5 & 54.4 & 90.7\% \\
    HoloV \venue{NeurIPS'25} & Pre & 61.7 & 65.3 & 57.5 & 1738 & 83.9 & 68.9 & 79.5 & 58.7 & \underline{95.7\%} \\
    PRUNESID \venue{ICLR'26} & Pre & 60.5 & 63.0 & -- & 1754 & 83.1 & 67.3 & 76.6 & -- & 94.9\% \\
    OC-VTP \venue{CVPR'26} & Pre & 58.1 & 61.7 & -- & -- & 82.3 & 67.4 & -- & 58.2 & 92.6\% \\
    \textbf{DiffPrune} & Pre & 62.3 & 64.7 & 57.4 & 1723 & 85.9 & 72.7 & 78.6 & 56.7 & \textbf{96.1\%} \\
    \bottomrule
  \end{tabular}
  }
  \caption{
    Results on LLaVA-NEXT-7B at $K{=}320$ (88.9\% pruning).
    This high-resolution setting starts from $2880$ visual tokens; Avg.\ reports normalized average retention.
    Best values are bold; second-best values are underlined.
  }
  \label{tab:main-llavanext}
\end{table*}

\paragraph{Results on LLaVA-NEXT-7B.}
LLaVA-NEXT-7B uses $672{\times}672$ inputs and produces $N{=}2880$ visual tokens, five times the token count of LLaVA-1.5-7B.
This setting gives a stronger test of visual-token pruning because the language-model prefill cost grows with the visual prefix length.
\Cref{tab:main-llavanext} reports results at $K{=}320$, where each method keeps only $11.1\%$ of the visual tokens.
DiffPrune obtains $96.1\%$ average retention, the best result among the reported methods and $0.4\%$ higher than HoloV.
We use the same scorer and pruning recipe as in LLaVA-1.5-7B, with no resolution-specific scheduling or architectural change.

\paragraph{Results on Qwen2.5-VL-7B.}
Qwen2.5-VL-7B differs from the LLaVA family in vision encoder, projector and language backbone, and processes images at native resolution, so the visual-token count varies per image and results are reported by pruning rate rather than a fixed budget $K$.
Under the three pruning rates listed in \cref{tab:main-qwen}, DiffPrune achieves the highest average retention at every rate and leads on $13$ of the $15$ benchmark--rate combinations.
At the $77.8\%$ pruning rate, it leads on all five benchmarks; its average-retention margin over HoloV widens from $4.0$ points at $66.7\%$ pruning to $6.1$ points at $88.9\%$ pruning.
No architectural or hyperparameter adaptation is applied across backbones; the same Scorer, Information Throttler, and schedule used on LLaVA-1.5-7B are reused without modification, supporting the claim that the framework is backbone-agnostic by construction.
We further examine this behavior across Qwen2.5-VL model scales in the supplementary material.

\begin{table}[!tb]
  \centering
  %
  {\small
  \setlength{\tabcolsep}{1.5pt}
  \begin{tabular}{@{}lcccccc@{}}
    \toprule
    Method & MMB & MME & POPE & SQA & TextVQA & Avg. \\
    \midrule
    \rowcolor{rowgray} \multicolumn{7}{c}{\textit{Upper Bound (100\%)}} \\
    Vanilla & 82.8 & 2304 & 86.1 & 84.7 & 84.8 & 100\% \\
    \midrule
    \rowcolor{rowgray} \multicolumn{7}{c}{\textit{Token Pruning Rate = 66.7\%}} \\
    FastV \venue{ECCV'24} & 75.7 & 2072 & 82.2 & 78.5 & 77.9 & 92.3\% \\
    HoloV \venue{NeurIPS'25} & 78.3 & 2093 & 85.0 & 79.8 & 78.9 & \underline{94.3\%} \\
    \textbf{DiffPrune} & 81.7 & 2279 & 84.9 & 86.4 & 79.0 & \textbf{98.3\%} \\
    \midrule
    \rowcolor{rowgray} \multicolumn{7}{c}{\textit{Token Pruning Rate = 77.8\%}} \\
    FastV \venue{ECCV'24} & 74.9 & 2036 & 80.7 & 78.0 & 69.0 & 89.2\% \\
    HoloV \venue{NeurIPS'25} & 76.5 & 2043 & 82.3 & 79.8 & 70.3 & \underline{90.8\%} \\
    \textbf{DiffPrune} & 81.0 & 2218 & 82.8 & 83.7 & 76.7 & \textbf{95.9\%} \\
    \midrule
    \rowcolor{rowgray} \multicolumn{7}{c}{\textit{Token Pruning Rate = 88.9\%}} \\
    FastV \venue{ECCV'24} & 69.2 & 1940 & 78.6 & 77.4 & 60.3 & 84.3\% \\
    HoloV \venue{NeurIPS'25} & 72.4 & 2006 & 80.7 & 79.5 & 61.8 & \underline{87.0\%} \\
    \textbf{DiffPrune} & 76.7 & 2113 & 79.8 & 82.9 & 76.7 & \textbf{93.1\%} \\
    \bottomrule
  \end{tabular}
  }
  \caption{
    Results on Qwen2.5-VL-7B across three pruning rates.
    Native-resolution inputs yield image-dependent token counts; Avg.\ is normalized by the unpruned model.
    Best values are bold; second-best values are underlined.
  }
  \label{tab:main-qwen}
\end{table}

\subsection{Efficiency Analysis}
\label{sec:efficiency}

We profile all methods on LLaVA-1.5-7B with $336{\times}336$ inputs and $N{=}576$ visual tokens, using a single NVIDIA A6000 GPU, batch size one, and FP16 precision.
Latencies are averaged over $30$ forward passes after warm-up.
\Cref{tab:efficiency} reports TTFT at $K{=}64$, decomposed into vision encoding, pruning-module decision, and LLM prefill.

\begin{table}[!tb]
  \centering
  {\small
  \setlength{\tabcolsep}{0.8pt}
  \begin{tabular}{@{}lccccccc@{}}
    \toprule
    Method & Loc. & $K$ & FLOPs & Enc. & Pruner & Prefill & TTFT \\
    \midrule
    Full (LLaVA) & ---  & 576 & 8.89 & 29.71 & 0.00 & 118.66 & 149.51 \\
    \midrule
    PruneSID & Pre & 64 & 2.13 & 31.80 & 43.39 & 39.74 & 115.84 \\
    PyramidDrop & L8--24 & 64 & 4.89 & 29.54 & 5.02 & 70.08 & 105.90 \\
    HoloV & Pre & 64 & 2.15 & 31.96 & 2.77 & 41.48 & 77.39 \\
    \textbf{DiffPrune} & Pre & 64 & 2.13 & 30.41 & \textbf{0.69} & 41.61 & \textbf{73.73} \\
    \bottomrule
  \end{tabular}
  }
  \caption{
    Efficiency breakdown on LLaVA-1.5-7B at $K{=}64$.
    Latencies are in ms; FLOPs are in T.
  }
  \label{tab:efficiency}
\end{table}

Vision encoding takes about $30$\,ms for all methods, since pruning is applied after the encoder.
The main differences come from the pruner and the LLM prefill.
PyramidDrop prunes inside the language model, so early layers still process the full visual prefix.
Its LLM prefill remains $70.08$\,ms with $4.89$\,T FLOPs.
In contrast, Pre-LLM methods reduce the visual prefix before it enters the language model, bringing LLM prefill to about $40$\,ms and FLOPs to $2.13$--$2.15$\,T.
Among these methods, the pruner itself becomes the main source of latency.
PRUNESID spends $43.39$\,ms on token selection, over $60\times$ the cost of DiffPrune, while HoloV spends $2.77$\,ms.
DiffPrune needs only $0.69$\,ms for selection and reaches the lowest TTFT in the comparison ($73.73$\,ms).
This follows directly from its inference path in \cref{eq:infer-fwd}: the Information Throttler is removed, leaving only one Scorer pass and index gathering.

\subsection{Ablation Studies}
\label{sec:ablations}

We isolate three training-time design choices in DiffPrune: the continuous score-to-loss path, the VP-Noise Gate, and the Diagonal-Attention Block.
All variants keep the base VLM, Scorer architecture, training data, and schedule fixed.
\Cref{tab:ablation} reports average retention over GQA, MMBench, POPE, and MME on LLaVA-1.5-7B.

\begin{table}[!tb]
  \centering
  {\small
  \setlength{\tabcolsep}{2.0pt}
  \begin{tabular}{@{}llcc@{}}
    \toprule
    Variant & Replaced component & $K{=}64$ & $K{=}128$ \\
    \midrule
    Full DiffPrune & -- & \textbf{95.5}\% & \textbf{97.3}\% \\
    \midrule
    Gumbel-STE & Score-to-loss path & 87.4\% & 91.3\% \\
    Scale gate & VP-Noise Gate & 93.2\% & 94.8\% \\
    Global attention & Diagonal-Attn.\ Block & 92.7\% & 95.0\% \\
    \bottomrule
  \end{tabular}
  }
  \caption{
    Component ablation of DiffPrune on LLaVA-1.5-7B.
    Scores are average retention over GQA, MMBench, POPE, and MME.
  }
  \label{tab:ablation}
\end{table}

Replacing the continuous score-to-loss path with a Gumbel-Softmax + STE variant gives the largest drop.
Retention decreases by $8.1$ and $6.0$ points at $K{=}64$ and $K{=}128$, respectively.
Since the rest of the method is unchanged, this result supports the claim from \cref{sec:pilot} that the gradient-continuous training path is important in the full VLM setting, not only in the DeiT probe.

The other two rows test the internal design of the Information Throttler.
Replacing VP-Noise with the fully differentiable scale gate $\tilde{\mathbf{x}}_i=\alpha_i\mathbf{x}_i$ reduces retention by $2.3$ and $2.5$ points, showing that noise injection gives a stronger information restriction than simple rescaling.
Replacing the Diagonal-Attention Block with a global-attention block reduces retention by $2.8$ and $2.3$ points.
This supports per-token adjustment, since global attention can let corrupted low-score tokens obtain information from high-score tokens.
The supplementary material provides a qualitative analysis of the learned token rankings and their feature-space redundancy.

\section{Conclusion}
\label{sec:conclusion}
We identify the discrete selection operator as a central obstacle in learned visual-token pruning: continuous importance scores must eventually become hard keep-or-drop decisions, and common surrogate-gradient solutions train through a backward path that is not the function executed in the forward pass.
DiffPrune addresses this mismatch by replacing training-time selection with continuous token-information control.
Its Soft Top-$K$ head and Information Throttler keep the scorer-to-loss path analytically differentiable during training, while inference collapses to a hard top-$K$ scorer-only path.
The resulting framework improves gradient-direction coherence by up to $28.4\times$, reduces local loss variation by roughly $24\times$ in our diagnostic study, and adds only $0.69$\,ms of selection overhead on LLaVA-1.5-7B.
Across these settings, the same learned ranking connects the continuous training objective to the hard deployment path, while the Information Throttler remains confined to training.
These findings suggest that full differentiability is a useful design rule for learned pruning, not merely an implementation detail.
Extending the same principle to other sparsity operators remains future work.

\clearpage

\bibliographystyle{unsrt}  
\bibliography{reference}

\clearpage

\appendix

\renewcommand{\thetable}{A\arabic{table}}
\setcounter{table}{0}
\renewcommand{\thefigure}{A\arabic{figure}}
\setcounter{figure}{0}

\section{Implementation Details and Training Hyperparameters}
\label{sec:appendix-impl}

\paragraph{Implementation.}
All experiments are implemented in PyTorch and run on a single NVIDIA RTX A6000 GPU, except that the Qwen2.5-VL-32B model is trained using four GPUs.
The implementation is built on the LLaVA-v1.5 codebase~\citep{liu2023visual_instruction_tuning}.
For every evaluated backbone, the vision encoder, multimodal projector, and language model remain frozen; only the DiffPrune Scorer and the Diagonal-Attention Block inside the Information Throttler are optimized.

\paragraph{Computing environment.}
Experiments are conducted on Ubuntu 22.04.3 LTS with two Intel Xeon Platinum 8352V CPUs, 256 GB of system memory, and one NVIDIA RTX A6000 GPU with 48 GB of memory.
The NVIDIA driver is version 580.105.08, and the PyTorch CUDA runtime is 11.8.
The software environment uses PyTorch 2.0.1, Transformers 4.37.2, Accelerate 0.21.0, Datasets 2.16.1, and Tokenizers 0.15.2.

\paragraph{Architecture.}
The Scorer contains two pre-norm Transformer blocks whose configurations follow the visual-encoder blocks of the corresponding backbone.
A linear head assigns one scalar importance logit to each visual token.
The Information Throttler uses the parameter-free VP-Noise Gate followed by one trainable Transformer block that maps noised tokens back to the representation space expected by the frozen multimodal projector.
The block uses the diagonal attention mask described in the main paper.
For LLaVA-1.5-7B, the Scorer and this trainable block add approximately $84$M parameters, about $1.2\%$ of the backbone model.
At inference time, the Information Throttler is removed; the deployed graph contains only the Scorer and the hard top-$K$ gathering operation defined in the main paper and further characterized in \cref{sec:appendix-softtopk}.

\paragraph{Training.}
The Scorer and the Diagonal-Attention Block inside the Information Throttler are trained jointly in a single stage on images and enriched English captions from the training split of ImageNet-1K-VL-Enriched~\citep{visual_layer_imagenet1k_vl_enriched}.
We construct one fixed $10\%$ subset by random sampling without replacement using seed $42$ and reuse it across all runs.
We use AdamW with learning rate $2{\times}10^{-4}$ and cosine decay.
Training uses a per-GPU batch size of $4$, four gradient-accumulation steps, an effective global batch size of $16$, and BF16 mixed precision.
The objective is caption generation under the next-token negative log-likelihood defined in the main paper; no labels from downstream evaluation benchmarks are used for training.
The Soft Top-$K$ temperature follows the cosine schedule specified in \cref{sec:appendix-softtopk}, gradually sharpening the token scores toward the hard top-$K$ ranking used at inference.
Training runs for at most one epoch over the subset, with early stopping determined by validation loss, and takes approximately ten GPU-hours.

\paragraph{Randomness and reporting.}
Our downstream VLM, ablation, and scalability results are obtained from a single run with global random seed $42$.
The controlled DeiT experiments shown in Figure~2 of the main paper use five random seeds $\{42,123,456,789,1000\}$ and report the mean and one standard deviation.
For our experiments, the seed is applied to Python's random module, NumPy, the PyTorch CPU and CUDA random number generators, and DataLoader workers.

\paragraph{Code availability.}
Source code is not provided during review.
It will be released under the Apache-2.0 license upon publication.

\paragraph{Backbones and token budgets.}
LLaVA-1.5-7B uses $336{\times}336$ images and produces $N{=}576$ visual tokens.
We evaluate $K{\in}\{128,64,32\}$, corresponding to $77.8\%$, $88.9\%$, and $94.4\%$ pruning.
LLaVA-NEXT-7B uses $672{\times}672$ inputs and produces $N{=}2880$ visual tokens; we report $K{=}320$.
Qwen2.5-VL-7B processes images at native resolution, so the number of visual tokens varies by example; for this backbone, results are reported by pruning rate rather than by a fixed $K$.

\paragraph{Baselines and reporting convention.}
Baselines include FastV~\citep{fastv}, SparseVLM~\citep{sparsevlm}, DART~\citep{dart}, PyramidDrop~\citep{pyramiddrop}, DivPrune~\citep{divprune}, VisionZip~\citep{visionzip}, HoloV~\citep{holov}, PRUNESID~\citep{fangprune}, OC-VTP~\citep{li2026ocvtp}, ICCTP~\citep{ICCTP}, AgilePruner~\citep{agilepruner}, ATP-LLaVA~\citep{atp-llava}, Dynamic-LLaVA~\citep{huang2025dynamic}, p-MoD~\citep{zhang2025pmod}, and GlimpsePrune~\citep{zeng2025glimpse}.
Baseline scores are taken directly from the corresponding papers rather than reproduced in our environment.
Because prior work does not always report every benchmark under every token budget, unavailable results are marked as ``--''.
When applicable, Loc.\ records whether pruning happens before the language model (Pre) or inside it (L$a$--L$b$).

\paragraph{Efficiency profiling.}
Latency is profiled on LLaVA-1.5-7B with $336{\times}336$ inputs, $N{=}576$ visual tokens, batch size one, and FP16 precision on the same NVIDIA A6000 GPU.
We average $30$ forward passes after warm-up.
Time-to-first-token is decomposed into vision encoding, pruning-module decision time, and language-model prefill, so the reported pruning overhead isolates the cost added by each selector.

\section{Soft Top-$K$ Operator and Low-Temperature Limit}
\label{sec:appendix-softtopk}

\paragraph{Operator framework.}
Let $\mathbf{s}\in\mathbb{R}^{N}$ be the raw importance logits, let $K\in\{1,\ldots,N-1\}$ be the token budget, and let $\tau>0$ be the temperature.
At the level needed for analysis, a threshold-adjusted Soft Top-$K$ operator can be represented as
\begin{equation}
  \begin{aligned}
    \alpha_i^{(\tau)}
    &= \left[\Phi_K(\mathbf{s}/\tau)\right]_i
     = f\!\left(\frac{s_i-\lambda_\tau(\mathbf{s},K)}{\tau}\right),\\
    \sum_{i=1}^{N}\alpha_i^{(\tau)} &= K,
  \end{aligned}
  \label{eq:appendix-softtopk-framework}
\end{equation}
where $f:\mathbb{R}\to(0,1)$ is smooth and strictly increasing, with $f(-\infty)=0$ and $f(+\infty)=1$.
The shared threshold $\lambda_\tau(\mathbf{s},K)$ adapts to the logits and the target budget.
The exact response function and threshold solver are implementation-specific.
Algorithm~\ref{alg:appendix-softtopk} summarizes the computation through an abstract \textsc{BudgetThreshold} interface, which denotes any differentiable realization that enforces the prescribed budget.
LapSum is one representative differentiable construction with this general structure~\citep{struski2025lapsum}.

\begin{algorithm}[H]
\caption{Abstract computation of budgeted Soft Top-$K$ weights}
\label{alg:appendix-softtopk}
\textbf{Input}: importance logits $\mathbf{S}\in\mathbb{R}^{B\times N}$, token budget $K$, temperature $\tau>0$\\
\textbf{Output}: weights $\boldsymbol{\alpha}\in(0,1)^{B\times N}$
\begin{algorithmic}[1]
\For{each batch element $b\in\{1,\ldots,B\}$}
    \State $\mathbf{u}_b \gets \mathbf{S}_b/\tau$
    \State $\lambda_b \gets \textsc{BudgetThreshold}(\mathbf{u}_b,K)$
    \State \textbf{such that} $\sum_{i=1}^{N} f(u_{b,i}-\lambda_b)=K$
    \For{$i=1,\ldots,N$}
        \State $\alpha_{b,i} \gets f(u_{b,i}-\lambda_b)$
    \EndFor
\EndFor
\State \textbf{return} $\boldsymbol{\alpha}$
\end{algorithmic}
\end{algorithm}

\paragraph{Implementation-independent properties.}
Equation~\eqref{eq:appendix-softtopk-framework} places the output in the budget set
\[
  \Delta_K
  = \left\{\boldsymbol{\alpha}\in[0,1]^N:
  \sum_{i=1}^{N}\alpha_i=K\right\},
\]
which enforces the token budget without an auxiliary budget loss.
Because every token shares the same threshold and $f$ is strictly increasing, $s_i>s_j$ implies $\alpha_i^{(\tau)}>\alpha_j^{(\tau)}$; hence the operator preserves the ranking induced by the Scorer.
It is also invariant to a common logit shift:
\[
  \Phi_K((\mathbf{s}+c\mathbf{1})/\tau)
  = \Phi_K(\mathbf{s}/\tau),
  \qquad c\in\mathbb{R},
\]
because the shift is absorbed by the shared threshold.
For finite $\tau$, differentiating through the smooth response and the data-dependent threshold provides the continuous score-to-loss path used during training.

\paragraph{Low-temperature limit.}
Let $s_{\pi_1}\geq\cdots\geq s_{\pi_N}$ denote the logits in descending order, and assume a strict boundary gap $s_{\pi_K}>s_{\pi_{K+1}}$.
Then the Soft Top-$K$ weights approach the hard mask:
\begin{equation}
  \lim_{\tau\to0^+}\alpha_{\pi_j}^{(\tau)}
  =
  \begin{cases}
    1, & j\leq K,\\
    0, & j>K,
  \end{cases}
  \qquad
  \lim_{\tau\to0^+}\Phi_K(\mathbf{s}/\tau)
  = H_K(\mathbf{s}).
  \label{eq:appendix-hardtopk-limit}
\end{equation}
As $\tau$ decreases, it amplifies the separation across the top-$K$ boundary, while the adaptive threshold maintains the exact sum $K$.
The saturation of $f$ therefore sends the highest $K$ weights to one and the remaining weights to zero.
This is an asymptotic statement rather than an equality at any finite temperature.
If $s_{\pi_K}=s_{\pi_{K+1}}$, the hard top-$K$ subset is itself non-unique and requires an additional tie-breaking convention.
Moreover, the limit establishes agreement at the ranking and mask level; it does not assert equality between the task loss under the continuous Information Throttler and that under hard token removal.

\paragraph{Training and inference.}
We use the cosine schedule
\begin{equation}
  \tau_t
  = \tau_{\mathrm{end}}
  + \frac{\tau_{\mathrm{start}}-\tau_{\mathrm{end}}}{2}
  \left[1+\cos\!\left(
  \pi\frac{\min(t,T_{\mathrm{anneal}})}{T_{\mathrm{anneal}}}
  \right)\right],
  \label{eq:appendix-temperature}
\end{equation}
with $\tau_{\mathrm{start}}{=}2.0$, $\tau_{\mathrm{end}}{=}0.1$, and $T_{\mathrm{anneal}}{=}2{,}000$ steps; the temperature remains at $0.1$ thereafter.
Training uses the finite-temperature weights to control the Information Throttler.
At inference, the Throttler is removed and hard top-$K$ is applied directly to the raw logits, without setting $\tau$ to zero or evaluating the abstract threshold representation in \cref{eq:appendix-softtopk-framework}.

\begin{figure*}[t]
  \centering
  \includegraphics[width=\textwidth]{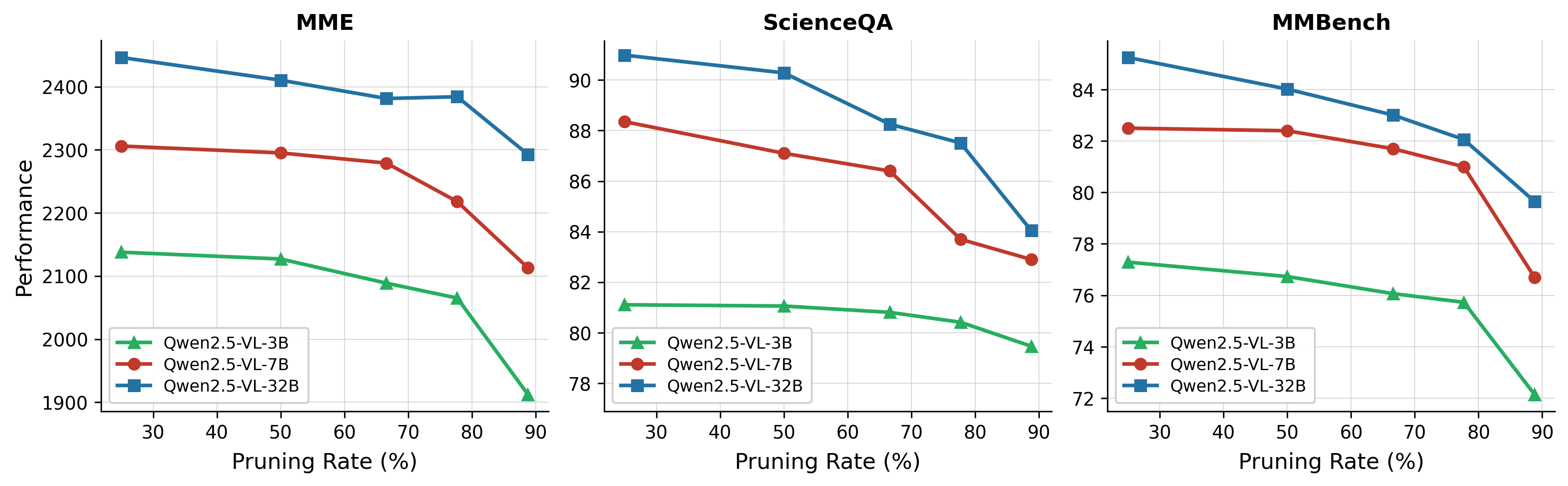}
  \caption{
    Scalability across Qwen2.5-VL model scales.
    Performance on MME, ScienceQA, and MMBench as a function of the visual-token pruning rate for Qwen2.5-VL-3B, 7B, and 32B.
  }
  \label{fig:appendix-scalability}
\end{figure*}

\section{Benchmark Descriptions}
\label{sec:appendix-benchmarks}

We evaluate DiffPrune on ten standard VLM benchmarks spanning compositional reasoning, perception, hallucination, text-centric understanding, and real-world VQA.
We follow the task-specific evaluation protocols and scripts recommended by the official LLaVA repository, use greedy decoding, and use the corresponding official evaluation toolkit or server whenever applicable.

\paragraph{GQA~\citep{hudson2019gqa}.}
GQA is a compositional visual question answering benchmark built from Visual Genome scene graphs.
It emphasizes multi-step reasoning over objects, attributes, spatial relations, and logical operations, making it useful for testing whether token pruning preserves structured visual evidence.

\paragraph{MMBench and MMBench-CN~\citep{liu2024mmbench}.}
MMBench evaluates multimodal perception and reasoning through multiple-choice questions organized into fine-grained ability dimensions.
We report both the English benchmark and its Chinese counterpart, MMBench-CN, which follows the same evaluation taxonomy and enables testing under a language distribution shift.

\paragraph{MME~\citep{fu2023mme}.}
MME evaluates perception and cognition through manually designed yes/no questions.
Its perception subset covers skills such as existence, count, position, and color recognition, while its cognition subset covers broader reasoning categories.
Because each image is paired with controlled questions, MME is sensitive to both visual information loss and hallucinated content.

\paragraph{POPE~\citep{pope}.}
POPE measures object hallucination by querying whether specific objects are present in an image.
Questions are sampled under random, popular, and adversarial strategies, probing whether a model relies on visual evidence rather than dataset-level object co-occurrence priors.

\paragraph{ScienceQA-IMG~\citep{lu2022learn}.}
ScienceQA-IMG contains image-grounded science questions that require both visual understanding and domain knowledge.
We use the image-containing subset, following standard VLM evaluation practice.

\paragraph{VQAv2~\citep{goyal2017making}.}
VQAv2 is an open-ended visual question answering benchmark designed to reduce language priors through complementary image pairs.
It tests whether the model's answer changes with the visual evidence rather than being driven only by question statistics.

\paragraph{TextVQA~\citep{singh2019towards}.}
TextVQA focuses on reading and reasoning over text appearing inside images, such as signs, labels, documents, and scene text.
This benchmark stresses OCR-sensitive visual tokens and is therefore a stringent test for pruning methods that remove large fractions of the visual prefix.

\paragraph{SEED-Bench~\citep{li2023seed}.}
SEED-Bench evaluates image-centric multimodal comprehension across dimensions such as scene understanding, spatial relation, instance identity, and instance attribute recognition.
We use it to assess whether pruning preserves broad semantic coverage beyond conventional VQA accuracy.

\paragraph{VizWiz~\citep{bigham2010vizwiz}.}
VizWiz contains questions about images taken by blind users in everyday settings.
The images often contain unusual framing, low quality, or unanswerable queries, making the benchmark a realistic stress test for robustness under noisy visual inputs.

For each benchmark, we report its standard score.

\section{Scalability Across Model Scales}
\label{sec:appendix-scalability}

To further test whether DiffPrune depends on a particular model capacity, we evaluate the 3B, 7B, and 32B variants of Qwen2.5-VL~\citep{bai2025qwen25vltechnicalreport} on MME, ScienceQA, and MMBench.
We use five visual-token pruning rates: $25\%$, $50\%$, $66.7\%$, $77.8\%$, and $88.9\%$.
The same pruning formulation and training recipe are used across model sizes.

\Cref{fig:appendix-scalability} shows that, across all three scales, performance decreases smoothly as the pruning rate increases, with no abrupt accuracy collapse even at $88.9\%$ pruning.
On MME, the 32B model remains close to $2{,}400$ points at the $77.8\%$ pruning setting, while the 7B and 3B variants show steady rather than discontinuous degradation as the pruning rate increases across the evaluated settings.
MMBench and ScienceQA show the same pattern: the 32B and 7B results remain close through high pruning rates, and all models preserve usable performance under aggressive compression.
These results suggest that DiffPrune's budgeted, score-driven formulation is not tied to a single Qwen2.5-VL model size.

\section{Qualitative Analysis}
\label{sec:visualization}

\begin{figure}[t]
  \centering
  \includegraphics[width=\linewidth]{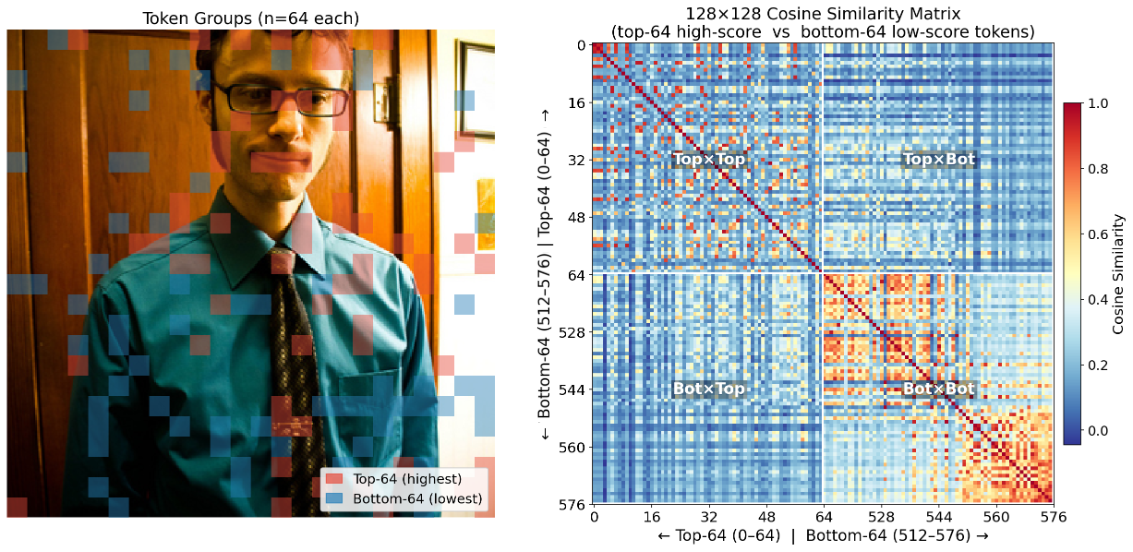}
  \caption{
    Learned token ranking on a VQAv2 example at $K{=}64$.
    Left: Tokens are ranked by learned importance scores, where red and blue indicate the top-$K$ retained tokens and the lowest-scored pruned tokens, respectively.
    Right: Similarity matrix of retained and pruned tokens.
    Pruned tokens are more redundant.
  }
  \label{fig:vqaviz}
\end{figure}

\Cref{fig:vqaviz} visualizes the token scores learned by DiffPrune on an example from the VQAv2 benchmark used in our evaluation.
For readability, we show the two extremes of the score ranking: the top-$64$ tokens cover visually salient regions such as the face, hands, and clothing, while the bottom-$64$ tokens mainly fall on visually uniform background areas.
The accompanying $128{\times}128$ cosine-similarity matrix $[\text{top-}64;|;\text{bottom-}64]$ provides a feature-space view of the same ranking: in this example, the retained-token block has lower internal similarity, whereas the lowest-scored block is more redundant.
This qualitative evidence is consistent with the quantitative results, while remaining illustrative rather than conclusive: the learned Scorer appears to allocate its limited budget toward visually informative and less redundant tokens.

\section{Limitation and Future Work}

DiffPrune relies on the continuity of visual-token embeddings.
VP-Noise throttles a token by interpolating its feature vector with isotropic Gaussian noise under a variance-preserving schedule before the frozen language model consumes it.
This assumption is natural for VLM visual tokens, but it does not transfer directly to pure LLM settings, where the objects being pruned may be discrete text tokens, token-indexed KV entries, or weight structures whose semantics are not preserved under the same feature-space perturbation.
DiffPrune further introduces training-time cost through the Scorer and Denoiser, even though the Throttler and Denoiser are removed at inference.
Finally, while our pilot studies show substantially higher gradient-direction coherence and smoother scorer loss landscapes than Gumbel-Softmax pruning, a complete theoretical account of how feature dimensionality, the geometry of the $K$-th order statistic, and surrogate-gradient bias interact remains open.

Despite these limitations, these properties also suggest promising directions for future work in high-resolution visual domains. In particular, medical imaging and related settings are characterized by substantial redundancy in visual tokens~\cite{yang2025one,young2026fewer,young2026scalar}. This phenomenon has been widely observed in medical image analysis~\cite{yang2024segmentation,yang2023geometry,chen2026tc,wu2026multimodal,xu2026unified}, multimodal medical foundation models~\cite{xu2023learning,xu2024medvilam,xu2024foundation,feng2026efficient}, and visual prune tasks~\cite{he2026beyond,he2026autoselect,he2026stepwise,chen2026learnable,chen2026pathselect,gao2026zerosense}, suggesting that DiffPrune may be especially effective in these scenarios.

\end{document}